\documentclass{article}

\usepackage{amsmath}
\usepackage{amssymb}
\usepackage{amsthm}

\newtheorem{thm}{Theorem}
\newtheorem{df}[thm]{Definition}

\usepackage{graphicx}

\title{Covariance and PCA for Categorical Variables}

\author{Hirotaka Niitsuma and Takashi Okada}

\begin{document}

\maketitle

\begin{abstract}
Covariances from categorical variables are defined using a regular simplex expression for categories.
The method follows the variance definition by Gini, and it gives the covariance as a solution of simultaneous equations.
 The calculated results give reasonable values for test data.
A method of principal component analysis (RS-PCA) is also proposed using regular simplex expressions, which allows easy interpretation of the principal components.
The proposed methods apply to variable selection problem of categorical data USCensus1990 data.
The proposed methods give appropriate criterion for the variable selection problem of categorical

\end{abstract}

\section{Introduction }

There are large collections of categorical data in many applications, such as information retrieval, web browsing, telecommunications, and market basket analysis.
While the dimensionality of such data sets can be large, the variables (or attributes) are seldom completely independent.
Rather, it is natural to assume that the attributes are organized into topics, which may overlap, i.e., collections of variables whose occurrences are somehow correlated to each other.

One method to find such relationships is to select appropriate variables 
and to view the data using a method like Principle Components Analysis (PCA) \cite{PCANNbook1996}.
This approach gives us a clear picture of the data using KL-plot of the PCA.
However, the method is not settled for the data including categorical data.
Multinomial PCA \cite{MPCA2002Buntine} is analogies to PCA for handling discrete or categorical data.
However, Multinomial PCA is a method based on the parametric model and it is difficult to construct a KL-plot for the estimated result.
Multiple Correspondence Analysis (MCA) \cite{MCA1998Clausen}
is analogous to PCA and can handle discrete categorical data.
MCA is also known as homogeneity analysis, dual scaling, or reciprocal averaging. 
The basic premise of the technique is that complicated multivariate data can be made more accessible by displaying their main regularities and patterns as plots ("KL-plot") .
MCA is not based on a parametric model and can give a "KL-plot" for the estimated result.
In order to represent the structure of the data, sometimes we need to ignore meaningless variables.
However, MCA does not give covariances or correlation coefficients between a pair of categorical variables.
It is difficult to obtain criteria for selecting appropriate categorical variables using MCA.

Symbolic Data Analysis\cite{IchinoSDA88,IchinoSDA94} is one of the methods to give multivariate descriptive for categorical data.
However we forces on more intuitive method which can give an understandable plot like K-L plot.

In this paper, we introduce the covariance between a pair of categorical variables using the regular simplex expression of categorical data. This can give a criterion for selecting appropriate categorical variables.
We also propose a new PCA method for categorical data.

\begin{table}
\caption{Fisher's data
\label{table:Fisher'sdata}
}
\centering
\fbox{
\begin{tabular}{cccccc}
$x_{eye}$ $\backslash$ $x_{hair}$  &fair &red &medium &dark &black\\
blue &326& 38& 241& 110& 3\\
light &688& 116& 584& 188& 4\\
medium &343 &84 &909 &412 &26\\
dark &98 &48 &403 &681 &85
\end{tabular}
}
\end{table}

\section{Gini's Definition of Variance and its Extension}
\label{GiniDefinitionExtension}

Let us consider the contingency table shown in Table \ref{table:Fisher'sdata}, which is known as Fisher's data  \cite{fishersData1940} on the colors of the eyes and hair of the inhabitants of Caithness, Scotland.
The table represents the joint population distribution of the categorical variable for eye color $x_{eye}$ and the categorical variable for hair color $x_{hair}$:
\begin{eqnarray}
x_{hair} &\in& \{\mbox{ fair red medium dark black}  \}
\nonumber
\\
x_{eye} &\in& \{\mbox{ blue light medium  dark}  \}.
\end{eqnarray}
Before defining the covariances among such categorical variables, 
$\sigma_{hair,eye}$, let us
consider the variance of a categorical variable. 
Gini successfully defined the variance for categorical data \cite{Gini71}.
\begin{eqnarray}
\sigma_{ii}=\frac{1}{2N^2}\sum_{a=1}^N \sum_{b=1}^N (x_{ia}-x_{ib})^2
\label{eq:def_gini_index}
\end{eqnarray}
where, $\sigma_{ii}$
is the variance of the $i$-th variable, $x_{ia}$ is the value of $x_i$ for the $a$-th instance, and $N$ is the number of instances.
The distance of a categorical variable between instances is defined as 
$x_{ia}-x_{ib} = 0$ if their values are identical,  and $= 1$ otherwise.
A simple extension of this definition to the covariance $\sigma_{ij}$ by replacing $(x_{ia} - x_{ib})^2$ to $(x_{ia} -x_{ib})(x_{ja} -x_{jb})$
does not give reasonable values for the covariance
$\sigma_{ij}$
 \cite{Okada2000cov}.
In order to avoid this difficulty, we extended the definition 
based on scalar values, $x_{ia}-x_{ib}$, to a new definition using a vector expression \cite{Okada2000cov}.
The vector expression for a categorical variable with three categories $x_i \in \{ r_1^i , r_2^i , r_{3}^i \}$ was defined by placing these three categories at the vertices of a regular triangle.

A regular simplex can be used for a variable with more than four categories. 
This is a straightforward extension of a regular triangle when the dimension of space is greater than two.
For example, a regular simplex in the 3-dimensional space is a regular tetrahedron.
Using a regular simplex, we can extend and generalize the definition of covariance to

\begin{df}
\label{defintition:sigmaVar_ij}
The covariance between a categorical variable
$x_i \in \{ r_1^i , r_2^i , ... r_{k_i}^i \}$
 with $k_i$ categories and
a categorical variable 
$x_j \in \{ r_1^j , r_2^j , ... r_{k_j}^j \}$
with $k_j$ categories
is defined as 
\begin{eqnarray}
&&
\sigma_{ij}=
\max_{L^{ij}}
(
\frac{1}{2N^2}
\nonumber
\\
&&
\sum_{a=1...N} \sum_{b=1...N} 
({\bf v}^{k_i}(x_{ia})-  {\bf v}^{k_i}(x_{ib}) )
L^{ij}
({\bf v}^{k_j}(x_{ja})-  {\bf v}^{k_j}(x_{jb}) )^t
),
\label{eq:def_sigma_ij_max}
\end{eqnarray}
where ${\bf v}^n(r_k)$ is the position of the $k$-th vertex of a regular $(n-1)$-simplex 
{\rm  \cite{hypertetrahedronBuekenhout98}}.
$r_k^i$ denotes the $k$-th element of the $i$-th categorical variable $x_i$.
$L^{ij}$ is a unitary matrix expressing the rotation between the regular
 simplexes for $x_i$ and $x_j$.
 \end{df}

Definition \ref{defintition:sigmaVar_ij} includes a procedure to maximize the covariance.
Using Lagrange multipliers, this procedure can be converted into a simpler problem of simultaneous equations, which can be solved using the Newton method. The following theorem enables this problem transformation.

\begin{thm}
\label{sigma_ij_def}
The covariance between categorical variable $x_i$ with $k_i$ categories and categorical variable $x_j$ with $k_j$ categories is expressed by
\begin{eqnarray}
&&  \sigma_{ij}=trace (A^{ij} {L^{ij}}^t) 
\label{eq:def_sigma_ij_byA},
\end{eqnarray}
where 
$A^{ij}$ is $(k_i-1) \times (k_j-1) $ matrix :
\begin{eqnarray}
A^{ij}  =
\frac{1}{2 N^2}
\sum_a \sum_b 
({\bf v}^{k_i}(x_{ia})  -{\bf v}^{k_i}(x_{ib})  )^t
({\bf v}^{k_j}(x_{ja})   -{\bf v}^{k_j}(x_{jb}) )
\label{eq:def_A_ij_by_n_simplex}.
\end{eqnarray}

$L^{ij}$ is given by the solution of the following simultaneous equations.

\begin{eqnarray}
&&A^{ij} {L^{ij}}^t =( A^{ij} {L^{ij}}^t )^t
\nonumber\\
 &&L^{ij} {L^{ij}}^t={\bf E}
\label{eq:EquationLij}
\end{eqnarray}
\end{thm}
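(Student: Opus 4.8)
The plan is to start by simplifying the quadratic form in Definition \ref{defintition:sigmaVar_ij}. First I would expand the double sum $\frac{1}{2N^2}\sum_a\sum_b ({\bf v}^{k_i}(x_{ia})-{\bf v}^{k_i}(x_{ib}))L^{ij}({\bf v}^{k_j}(x_{ja})-{\bf v}^{k_j}(x_{jb}))^t$, observing that this is a scalar and hence equals its own trace. Using cyclicity of the trace and the bilinearity of the sum, the expression collapses: the cross terms $\sum_a\sum_b {\bf v}^{k_i}(x_{ia})L^{ij}{\bf v}^{k_j}(x_{jb})^t$ factor through the marginal sums, while the diagonal-in-$a$ and diagonal-in-$b$ terms combine. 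Rewriting each scalar summand as $\mathrm{trace}\big(L^{ij}({\bf v}^{k_j}(x_{ja})-{\bf v}^{k_j}(x_{jb}))^t({\bf v}^{k_i}(x_{ia})-{\bf v}^{k_i}(x_{ib}))\big)$ and pulling the sum inside the trace, the matrix that appears next to $L^{ij}$ is exactly ${A^{ij}}^t$ (up to the transpose convention), which yields $\sigma_{ij}=\mathrm{trace}(A^{ij}{L^{ij}}^t)$ once the maximization over $L^{ij}$ is taken into account. This establishes (\ref{eq:def_sigma_ij_byA}).

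Next I would handle the maximization. We are maximizing the linear functional $L^{ij}\mapsto \mathrm{trace}(A^{ij}{L^{ij}}^t)$ over the set of unitary (orthogonal) matrices, i.e.\ subject to the constraint $L^{ij}{L^{ij}}^t={\bf E}$. I would introduce a matrix $\Lambda$ of Lagrange multipliers for the symmetric constraint $L^{ij}{L^{ij}}^t-{\bf E}=0$ and form the Lagrangian $\mathcal{L}=\mathrm{trace}(A^{ij}{L^{ij}}^t)-\mathrm{trace}(\Lambda(L^{ij}{L^{ij}}^t-{\bf E}))$. Differentiating with respect to the entries of $L^{ij}$ (using $\partial\,\mathrm{trace}(A{L}^t)/\partial L = A$ and $\partial\,\mathrm{trace}(\Lambda LL^t)/\partial L = (\Lambda+\Lambda^t)L$) gives the stationarity condition $A^{ij}=(\Lambda+\Lambda^t)L^{ij}$, i.e.\ $A^{ij}{L^{ij}}^t=\Lambda+\Lambda^t$, whose right-hand side is symmetric. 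Hence any stationary point satisfies $A^{ij}{L^{ij}}^t=(A^{ij}{L^{ij}}^t)^t$, which together with the feasibility condition $L^{ij}{L^{ij}}^t={\bf E}$ is precisely the system (\ref{eq:EquationLij}). Conversely, a short argument (comparing with the singular value decomposition $A^{ij}=U\Sigma V^t$, where the maximizer is $L^{ij}=UV^t$ and $A^{ij}{L^{ij}}^t=U\Sigma U^t$ is manifestly symmetric) shows the system's solution indeed attains the maximum, so that solving (\ref{eq:EquationLij}) is equivalent to carrying out the maximization in Definition \ref{defintition:sigmaVar_ij}.

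The routine but slightly delicate bookkeeping is the first step: one must be careful that ${\bf v}^{k}(\cdot)$ is a row vector of length $k-1$, so that $({\bf v}^{k_i}(x_{ia})-{\bf v}^{k_i}(x_{ib}))^t({\bf v}^{k_j}(x_{ja})-{\bf v}^{k_j}(x_{jb}))$ is a $(k_i-1)\times(k_j-1)$ matrix as claimed in (\ref{eq:def_A_ij_by_n_simplex}), and that the transpose lands $L^{ij}$ on the correct side inside the trace. The main obstacle, however, is the converse direction of the maximization argument — verifying that the stationary point picked out by (\ref{eq:EquationLij}) is a global maximum rather than a saddle or minimum over the (disconnected, compact) orthogonal group. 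I would address this by the SVD characterization above: writing $\mathrm{trace}(A^{ij}{L^{ij}}^t)=\mathrm{trace}(\Sigma V^t {L^{ij}}^t U)=\sum_m \sigma_m (V^t{L^{ij}}^tU)_{mm}$ and noting each diagonal entry of an orthogonal matrix is at most $1$ in absolute value, so the sum is maximized by $L^{ij}=UV^t$, which is exactly the solution making $A^{ij}{L^{ij}}^t$ symmetric and positive semidefinite. This pins down which root of (\ref{eq:EquationLij}) the Newton iteration should converge to.
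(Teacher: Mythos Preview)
Your proposal is correct and follows essentially the same approach as the paper: recast the objective as $\mathrm{trace}(A^{ij}{L^{ij}}^t)$ and apply a matrix Lagrange multiplier $\Lambda$ to the orthogonality constraint, obtaining the symmetry condition in (\ref{eq:EquationLij}) as the stationarity equation. You are in fact more thorough than the paper, which restricts to $k_i=k_j$ and omits both the explicit differentiation and the SVD verification that the stationary point is a global maximum (the latter appears only as a separate, unproved theorem immediately afterward).
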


\begin{proof}
Here, we consider the case where $k_i=k_j$ for the sake of simplicity.
Definition \ref{defintition:sigmaVar_ij} gives a conditional
maximization problem :
\begin{eqnarray}
\sigma_{ij}=
\max_{L^{ij}}  && 
\frac{1}{2 N^2}
\sum_a \sum_b 
({\bf v}^{k_i}(x_{ia})  -{\bf v}^{k_i}(x_{ib})  )
L^{ij}
({\bf v}^{k_j}(x_{ja})   -{\bf v}^{k_j}(x_{jb}) )^t
\nonumber\\
 \mbox{subject to }&&L^{ij} {L^{ij}}^t=  {\bf E}
\label{eq:cov_def_Start}
\end{eqnarray} 
The introduction of Lagrange multipliers $\Lambda$ for the constraint $L^{ij} {L^{ij}}^t={\bf E}$ gives the Lagrangian function:
\[
V=trace( A^{ij}{L^{ij}}^t)
- trace( \Lambda^t 
L^{ij}  {L^{ij}}^t-{\bf E}
),
\]
where $\Lambda$ is $k_i \times k_i$ matrix.
A stationary point of the Lagrangian function $V$ is a solution of the
 simultaneous equations (\ref{eq:EquationLij}).
 \end{proof}

Instead of maximizing (\ref{eq:def_sigma_ij_max}) with constraint $L^{ij}  {L^{ij}}^t={\bf E}$  , we can get the covariance by solving the equations (\ref{eq:EquationLij}), which can be solved easily using the Newton method
.
More efficient way to compute the covariance is
the following Singular Value Decomposition of matrix $A^{ij}$.

\begin{thm}
Let Singular Value Decomposition of matrix $A^{ij}$ be 

\[
A^{ij} = U D V^t.
\]
The solution of the maximization problem (\ref{eq:def_sigma_ij_max}) is given
\[
L^{ij} = U V^t,
\]
\[
\sigma_{ij}=trace(D).
\]
 \end{thm}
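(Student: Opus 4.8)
The plan is to connect the optimization in Definition~\ref{defintition:sigmaVar_ij} with the classical \emph{orthogonal Procrustes problem}. By Theorem~\ref{sigma_ij_def} we already know $\sigma_{ij}=\max_{L^{ij}}\operatorname{trace}(A^{ij}{L^{ij}}^t)$ subject to $L^{ij}{L^{ij}}^t={\bf E}$, so the task reduces to showing that the maximizer of $\operatorname{trace}(A^{ij}{L^{ij}}^t)$ over orthogonal matrices is $L^{ij}=UV^t$ and that the optimal value equals $\operatorname{trace}(D)$. First I would substitute the SVD $A^{ij}=UDV^t$ into the objective and use cyclicity of the trace to rewrite $\operatorname{trace}(UDV^t{L^{ij}}^t)=\operatorname{trace}(D\,V^t{L^{ij}}^tU)$. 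Setting $W=V^t{L^{ij}}^tU$, note that $W$ is a product of orthogonal matrices and hence orthogonal, and conversely any orthogonal $W$ arises from some admissible $L^{ij}$ (namely $L^{ij}=UW^tV^t$); so the constrained problem becomes $\max_{W:\,WW^t={\bf E}}\operatorname{trace}(DW)$.

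Next I would bound $\operatorname{trace}(DW)$. Writing $D=\operatorname{diag}(d_1,\dots,d_m)$ with $d_\ell\ge 0$, we have $\operatorname{trace}(DW)=\sum_\ell d_\ell W_{\ell\ell}$, and since $W$ is orthogonal each diagonal entry satisfies $|W_{\ell\ell}|\le 1$. Therefore $\operatorname{trace}(DW)\le\sum_\ell d_\ell=\operatorname{trace}(D)$, with equality when $W={\bf E}$, i.e.\ when $V^t{L^{ij}}^tU={\bf E}$, which gives $L^{ij}=UV^t$. One then checks that $L^{ij}=UV^t$ is indeed orthogonal (a product of orthogonal matrices) so it is feasible, and that at this point $\sigma_{ij}=\operatorname{trace}(A^{ij}(UV^t)^t)=\operatorname{trace}(UDV^tVU^t)=\operatorname{trace}(UDU^t)=\operatorname{trace}(D)$. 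It is also worth remarking that $L^{ij}=UV^t$ solves the stationarity equations~(\ref{eq:EquationLij}): $A^{ij}{L^{ij}}^t=UDV^tVU^t=UDU^t$ is symmetric, which is consistent with Theorem~\ref{sigma_ij_def}.

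The main subtlety — the part I would be most careful about — is the rectangular / unequal-dimension case $k_i\ne k_j$, where $A^{ij}$ is $(k_i-1)\times(k_j-1)$ and $L^{ij}$ is not square, so ``unitary'' must be read as having orthonormal rows or columns (a Stiefel-manifold constraint) rather than full orthogonality. In that setting $W=V^t{L^{ij}}^tU$ need not range over all orthogonal matrices, and the inequality $|W_{\ell\ell}|\le 1$ needs to be re-derived from $\|W\text{-columns}\|\le 1$; the bound $\operatorname{trace}(DW)\le\operatorname{trace}(D)$ still holds, and equality is still attained, but the bookkeeping about which dimension is larger determines the exact shape of $U$, $V$, and $D$. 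A secondary point to state clearly is that the reduction from Definition~\ref{defintition:sigmaVar_ij} to the trace form already handled this via~(\ref{eq:def_sigma_ij_byA}), so here I would simply invoke Theorem~\ref{sigma_ij_def} and focus the argument on the Procrustes step, noting that the SVD approach is computationally preferable to Newton's method because it is direct and non-iterative.
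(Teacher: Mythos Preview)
The paper states this theorem without proof; it simply writes the SVD result and moves directly to the numerical application on Fisher's data. So there is no proof in the paper to compare against, and your argument supplies what the paper omits.

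Your proof is correct and is precisely the classical orthogonal Procrustes argument: reduce to $\max_{L}\operatorname{trace}(A^{ij}L^t)$ via Theorem~\ref{sigma_ij_def}, substitute the SVD, change variables to $W=V^tL^tU$, and bound $\operatorname{trace}(DW)=\sum_\ell d_\ell W_{\ell\ell}\le\sum_\ell d_\ell$ using $|W_{\ell\ell}|\le 1$. Your consistency check that $A^{ij}(UV^t)^t=UDU^t$ is symmetric, hence satisfies~(\ref{eq:EquationLij}), is a nice tie-back to the Lagrangian stationarity conditions the paper derives for Theorem~\ref{sigma_ij_def}.

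Your caveat about the rectangular case $k_i\ne k_j$ is well placed. The paper itself sidesteps this issue (the proof of Theorem~\ref{sigma_ij_def} explicitly restricts to $k_i=k_j$ ``for the sake of simplicity''), so the ambiguity in the meaning of ``unitary'' for non-square $L^{ij}$ is inherited from the paper, not introduced by you. Interpreting $L^{ij}$ as a point on the appropriate Stiefel manifold and taking the thin SVD so that $UV^t$ has the right shape is the standard fix, and the inequality $\operatorname{trace}(DW)\le\operatorname{trace}(D)$ survives because the relevant columns (or rows) of $W$ still have norm at most one.
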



Application of this method to Table \ref{table:Fisher'sdata} gives 
\begin{eqnarray}
\sigma_{hair,hair}= 0.36409, \sigma_{eye,hair}=0.081253 , \sigma_{eye,eye}=0.34985
\end{eqnarray}
We can derive a correlation coefficient using the covariance and variance values of categorical variables in the usual way. 
The correlation coefficients for $x_{eye},x_{hair}$ for Table \ref{table:Fisher'sdata} 
is 0.2277.

\section{ Principal Component Analysis }

\subsection{Principal Component Analysis of Categorical Data using Regular Simplex\,(RS-PCA)}

Let us consider categorical variables $x_1,x_2...x_J$.
For the $a$-th instance, $x_i$ takes value $x_{ia}$.
Here, we represent $x_{ia}$ by the vector of vertex coordinates ${\bf v}^{k_i}( x_{ia} )$.
Then, the values of all the categorical variables $x_1,x_2...x_J$ for the $a$-th instance can be represented by the concatenation of the vertex coordinate vectors of all the categorical variables:
\begin{eqnarray}
{\bf x}(a)
=( {\bf v}^{k_1}( x_{1a} ) ,{\bf v}^{k_2}( x_{2a}  ),...,{\bf v}^{k_J}( x_{Ja} )       ).
\label{eq:defTotalVecx}
\end{eqnarray}
Let us call this concatenated vector the {\it List of Regular Simplex Vertices }(LRSV).
The covariance matrix of LRSV can be written as
\begin{eqnarray}
{\cal A}&=&
\frac{1}{N}
\sum_{a=1}^N
({\bf x}({a})  -\bar{\bf x}  )^t
({\bf x}({a})  -\bar{\bf x}   )
=
\left[ 
\begin{array}{c|c|c|c}
 A^{11}  &A^{12}  & ...  & A^{1J}  \\
\hline
A^{21}  & A^{22}  & ...  & A^{2J}  \\
\hline
 ... & ... & ...  &... \\
\hline
 A^{J1}  &  A^{J2} & ...  & A^{JJ}  \\
\end{array}
\right].
\label{eq:delCalA}
\end{eqnarray}
where 
$\bar{\bf x} = \frac{1}{N}\sum_{a=1}^N {\bf x}({a}) $
 is an average of the LRSV.
The equation (\ref{eq:delCalA}) shows the covariance matrix of LRSV. 
Since its eigenvalue decomposition can be regarded as a kind of Principal Component Analysis (PCA) on LRSV, 
we call it the {\it Principal Component Analysis using the Regular Simplex for categorical data} (RS-PCA).

When we need to interpret an eigenvector from RS-PCA, it is useful to express the eigenvector as a linear combination of the following vectors.
The first basis set, $d$, shows vectors from one vertex to another vertex in the regular simplex. 
The other basis set, $c$, show vectors from the center of the regular simplex to one of the vertices.
\begin{eqnarray}
&&{\bf{d}}^{k_j}(a \rightarrow b) =  {\bf{v}}^{k_j}(b) - {\bf{v}}^{k_j}(a) \hspace{1cm} a,b=1,2...k_j 
\\
&&{\bf{c}}^{k_j}(a)=  {\bf{v}}^{k_j}(a) - \frac{\sum_{b=1}^{k_j }{\bf{v}}^{k_j}(b) }{k_j}
\hspace{1cm}
a=1,2...k_j
\end{eqnarray}
Eigenvectors defined in this way change their basis set depending on its direction to the regular simplex, but this has the advantage of allowing us to grasp its meaning easily. 
For example, the first two principal component vectors from the data in Table \ref{table:Fisher'sdata} are expressed using the following linear combination. 
\begin{eqnarray}
{\bf v}^{ rs-pca}_1
&&=
-0.63 \cdot {\bf{d}}^{eye}(medium \rightarrow light)
-0.09 \cdot {\bf{c}}^{eye}(blue)
-0.03 \cdot {\bf{c}}^{eye}(dark)
\nonumber\\
&&
-0.76 \cdot {\bf{d}}^{hair}(medium \rightarrow fair)
+0.07 \cdot {\bf{d}}^{hair}(dark \rightarrow medium)
\label{eq:firstPrincipalComponentExplain}
\\
{\bf v}^{ rs-pca}_2
&&=
0.64 \cdot {\bf{d}}^{eye}(dark \rightarrow light)
-0.13 \cdot {\bf{d}}^{eye}(medium \rightarrow light)
\nonumber\\
&&
-0.68 \cdot {\bf{d}}^{hair}(dark \rightarrow medium)
+0.30 \cdot {\bf{c}}^{hair}(fair)
\label{eq:secondPrincipalComponentExplain}
\end{eqnarray}

This expression shows that the axis is mostly characterized by the difference between 
$x^{eye}=light$ and $x^{eye}=medium$ values, and 
the difference between $x^{hair}=medium$ and $x^{hair}=fair$ values.
The KL-plot using these components is shown in Figure \ref{fig:FisherDataRS-PCA} for Fisher's data.
In this figure, the lower side is mainly occupied by data with values: $x^{eye}=medium$ or $x^{hair}=medium$.
The upper side is mainly occupied by data with values $x^{eye}=light$ or $x^{hair}=fair$.
Therefore, we can confirm that 
$({\bf{d}}^{eye}(medium \rightarrow light) + {\bf{d}}^{hair}(medium \rightarrow fair))$
is the first principal component.
In this way, we can easily interpret the data distribution on the KL-plot when we use the RS-PCA method.

Multiple Correspondence Analysis (MCA) \cite{GowerBookBiplot96} provides a similar PCA methodology to that of RS-PCA.
It uses the representation of categorical values as an 
indicator matrix (also known as a dummy matrix).
MCA gives a similar KL-plot.
However, the explanation of its principal components is difficult, 
because their basis vectors contain one redundant dimension compared to the regular simplex expression.
Therefore, 
a conclusion from MCA can only be drawn after making a great effort to inspect the KL-plot of the data.

\begin{figure}
 {
 
\includegraphics
[width=1.0\textwidth] 
 {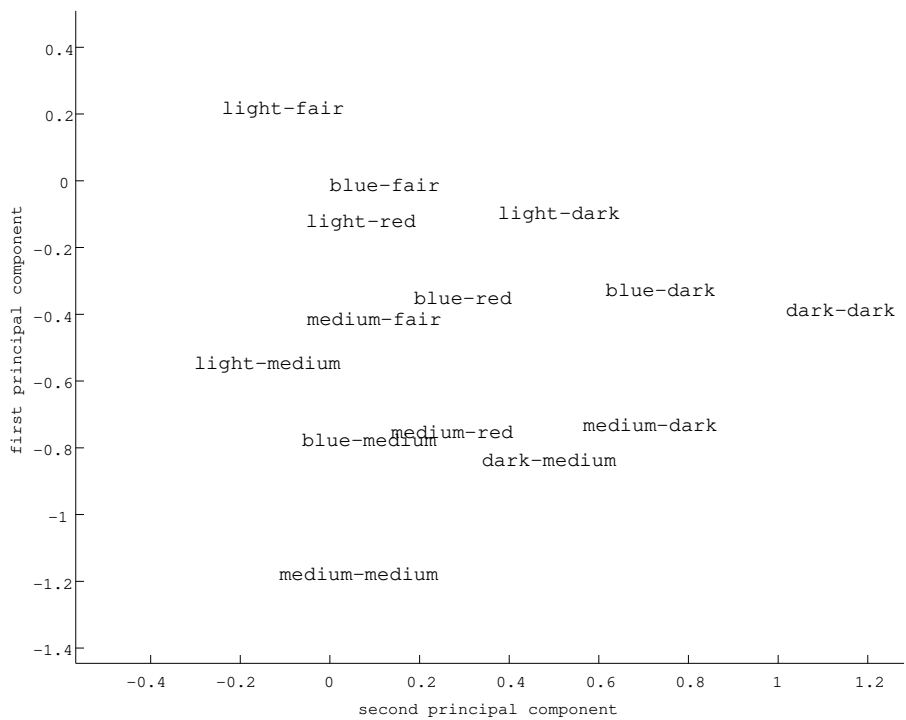}
}
\caption{
KL-plot of Fisher's data calculated using RS-PCA.
A point is expressed by a pair of eye and hair categories: $x^{eye}-x^{hair}$.
}

\label{fig:FisherDataRS-PCA}
\end{figure}

\section{Experimental Results}

We evaluated the performance of our algorithms on a 
1990 US census dataset\\
(http://kdd.ics.uci.edu/databases/census1990/USCensus1990.html).
1990 US census dataset is a multivariate categorical data which describes census data of US.
The data set includes 68 discretized attributes such as age, income, occupation, work status, etc.
In this experiment, we ignore categorical variable "iOthrserv", since this variable has same value on almost all entries.
We randomly selected 3k entries from the 2.5M available entries in the entire data set, and apply our method to 67 discretized attributes.
Table \ref{table:categCovVarUSCensus1990} and \ref{table:categCorrelationCoefficientsUSCensus1990} show covariances and correlation coefficients respectively, among some categorical variables of 1990 US census dataset given by equation (\ref{eq:def_sigma_ij_max}).
Figure \ref{fig:usCensus1990index_eigenvalue} shows eigenvalues of a
covariance matrix for the 67 categorical variables vs mode number.
In this figure, only top 20 eigenvalues have large values.
This means, almost 20 categorical variables are sufficient to explain 1990 US census dataset.

Figure \ref{fig:usCensus1990VarCovEig} is K-L plot of categorical
variables.
In this figure, we can see categorical variable iRlabor corresponds to
1st principal component and iSex corresponds to 2nd principal component,
since these variables close to correspond axes.

In the following, results of RS-PCA are compared focused on these two categorical variables iRlabor and
iSex.
Figure \ref{fig:usCensus1990rspcaFull} plots result of RS-PCA using all
67 variables.
Figure \ref{fig:usCensus1990RSPCAExi1} is RS-PCA result using first top 20
variables:iRlabor, iSex , and so on.
Almost same structure to the result using all variables is appeared in this
figure.
Figure \ref{fig:usCensus1990rspcaMod} is RS-PCA result using rest 37-67th
principal components.
In this figure, we cannot find similar structure.
Figure \ref{fig:usCensus1990rspcaPc1To5} is RS-PCA result using top 5
principal components.
We can find similar structure to the result using all variables.
This results intend that abstracted data structure can described by only
5 variables.

The above mentioned results show that our method can use for variables
selection of categorical data.

\begin{figure}
 {\includegraphics[width=0.9\textwidth]
{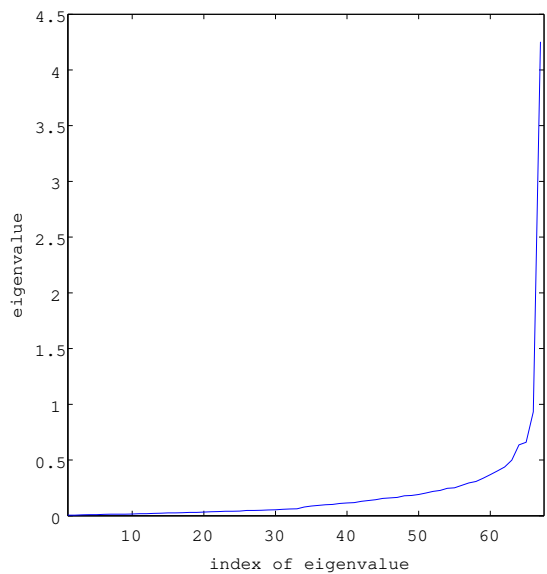}
 }
\caption{eigenvalue vs mode number}
\label{fig:usCensus1990index_eigenvalue}
\end{figure}

\begin{figure}
 {
 \includegraphics[width=0.9\textwidth]
 {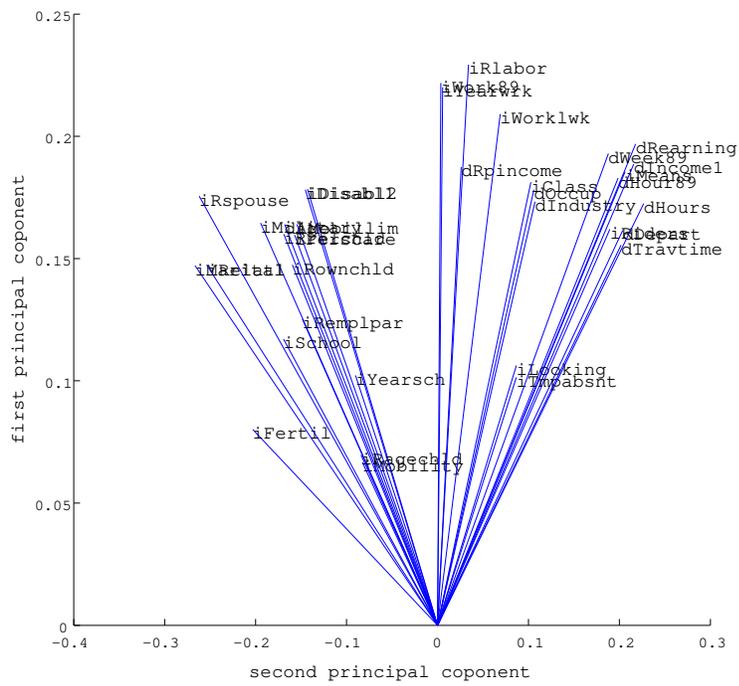}
}
\caption{KL-plot of USCensus 1990 using all variables}
\label{fig:usCensus1990VarCovEig}
\end{figure}

\begin{table}
 \caption{covariance of USCensus1990}
 \label{table:categCovVarUSCensus1990}
\fbox{
\begin{tabular}{cccccccccc}
 &dAnc1 	    &dAnc2 	&iClass 	&dHisp      &dIndu     &iLan1 &dOcc 	&dPOB	&iSex \\
dAncstry1&	0.332&		0.076&		0.016&		0.017&		0.018&		0.05&	0.021&	0.039&	0.003 \\
dAncstry2& 	0.076&		0.226&		0.012&		0.006&		0.012&		0.024&	0.014&	0.015&	0.002 \\
iClass	&	0.016&		0.012&		0.332&		0.003&		0.206&		0.038&	0.19&	0.007&	0.019 \\
dHispanic&	0.017&		0.006&		0.003&		0.033&		0.003&		0.016&	0.003&	0.013&	0\\
dIndustry& 	0.018&		0.012&		0.206&		0.003&		0.392&		0.036&	0.209&	0.008&	0.029\\
iLang1&		0.05&		0.024&		0.038&		0.016&		0.036&		0.166&	0.038&	0.042&	0\\
dOccup&		0.021&		0.014&		0.19&		0.003&		0.209&		0.038&	0.38&	0.008&	0.036\\
dPOB&		0.039&		0.015&		0.007&		0.013&		0.008&		0.042&	0.008&	0.08&	0.001\\
iSex&		0.003&		0.002&		0.019&		0&			0.029&		0&		0.036&	0.001&	0.249
\end{tabular}
}
\end{table}

\begin{table}
\caption{Correlation coefficients of USCensus1990}
\label{table:categCorrelationCoefficientsUSCensus1990}
\fbox{
\begin{tabular}{cccccccccc}
 			&dAnc1 	&dAnc2 	&iClass 	&dHisp &dIndu &iLan1 &dOcc 	&dPOB	&iSex \\
dAncstry1&	1&			0.28&		0.049&		0.164&		0.05&		0.215&	0.06&	0.24&	0.012\\
dAncstry2&	0.28&		1&			0.044&		0.074&		0.042&		0.126&	0.048&	0.117&	0.012\\
iClass	&	0.049&		0.044&		1&			0.037&		0.571&		0.163&	0.536&	0.047&	0.067\\
dHispanic&	0.164&		0.074&		0.037&		1&			0.033&		0.223&	0.035&	0.261&	0.01\\
dIndustry&	0.05&		0.042&		0.571&		0.033&		1&			0.143&	0.542&	0.046&	0.093\\
iLang1&		0.215&		0.126&		0.163&		0.223&		0.143&		1&		0.154&	0.369&	0.001\\
dOccup&		0.06&		0.048&		0.536&		0.035&		0.542&		0.154&	1&		0.048&	0.117\\
dPOB&		0.24&		0.117&		0.047&		0.261&		0.046&		0.369&	0.048&	1&		0.01\\
iSex&		0.012&		0.012&		0.067&		0.01&		0.093&		0.001&	0.117&	0.01&	1
\end{tabular}
 }
\end{table}

\begin{figure}
 {\includegraphics[width=0.9\textwidth]
{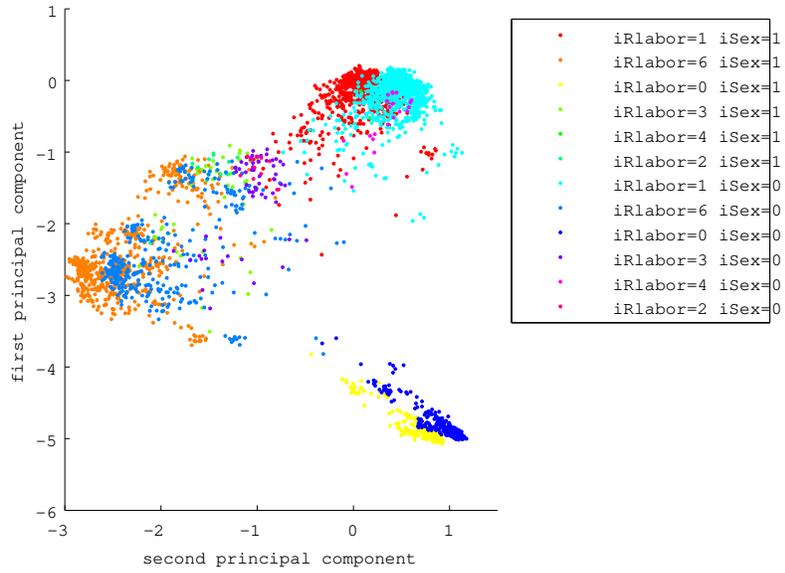}}
\caption{RS-PCA   of USCensus1990 }
\label{fig:usCensus1990rspcaFull}
\end{figure}

\begin{figure}
 {
 \includegraphics[width=0.9\textwidth]
 {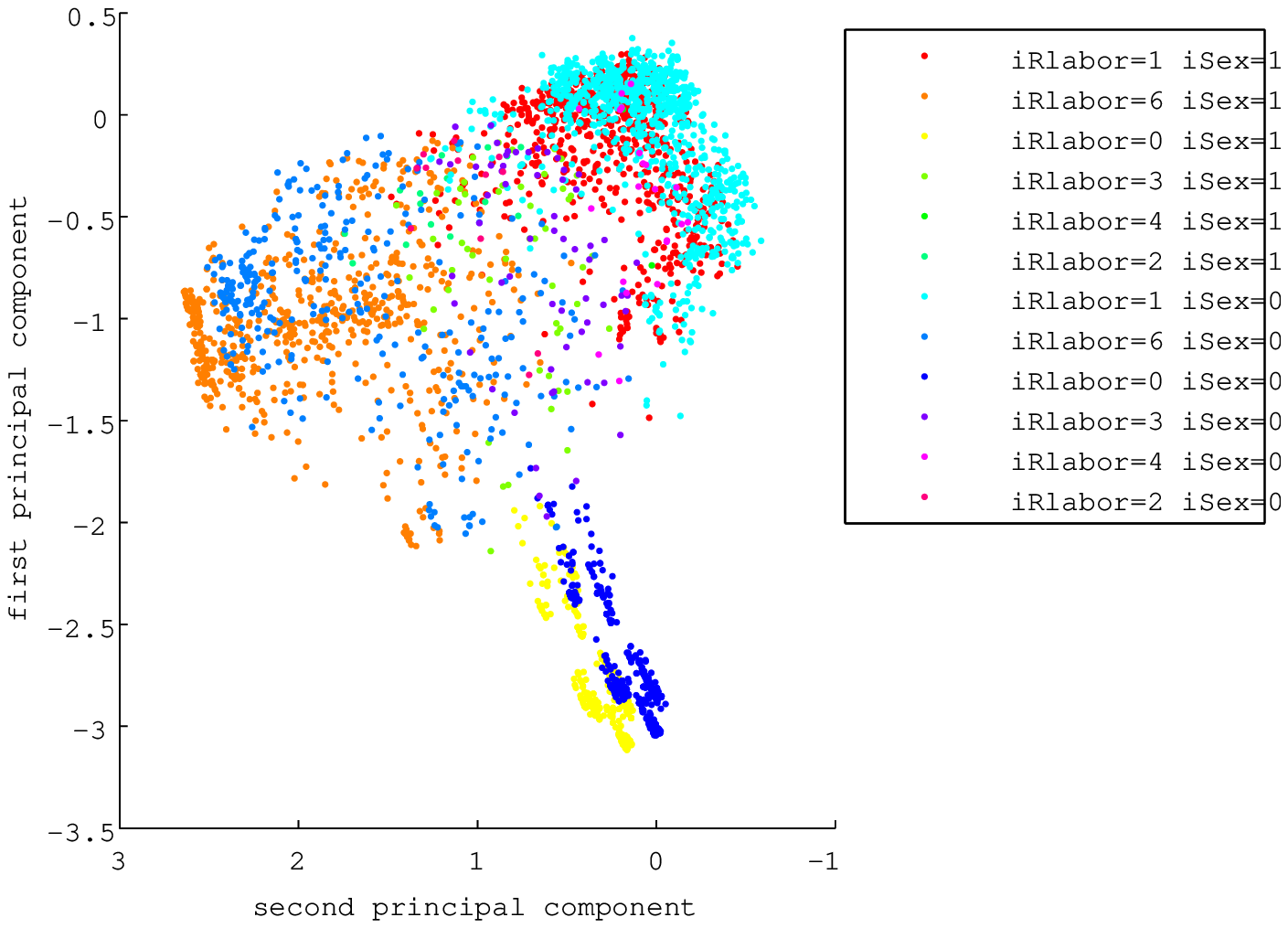}
}
\caption{RS-PCA using top 20 principal components}
\label{fig:usCensus1990RSPCAExi1}
\end{figure}

\begin{figure}
 {
 \includegraphics[width=0.9\textwidth]
{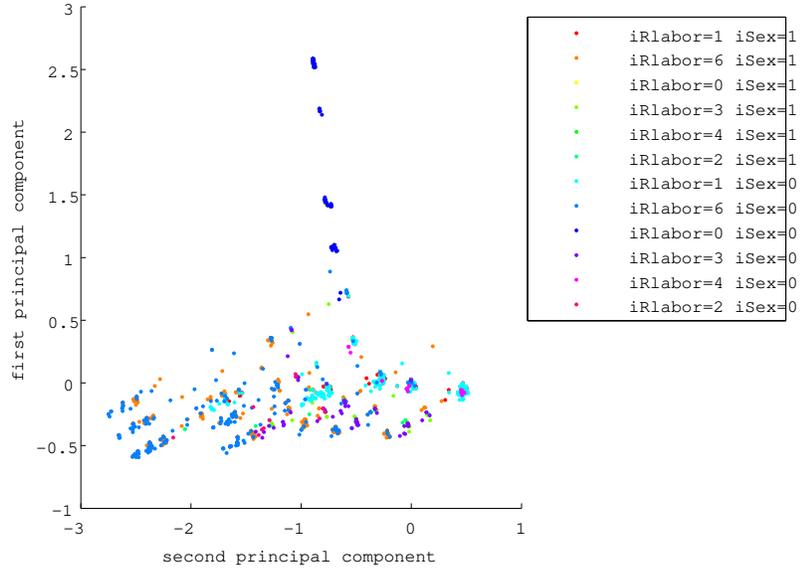}
}
\caption{RRS-PCA using 37-67th principal components }
\label{fig:usCensus1990rspcaMod}
\end{figure}

\begin{figure}
 {
 \includegraphics[width=0.9\textwidth]
{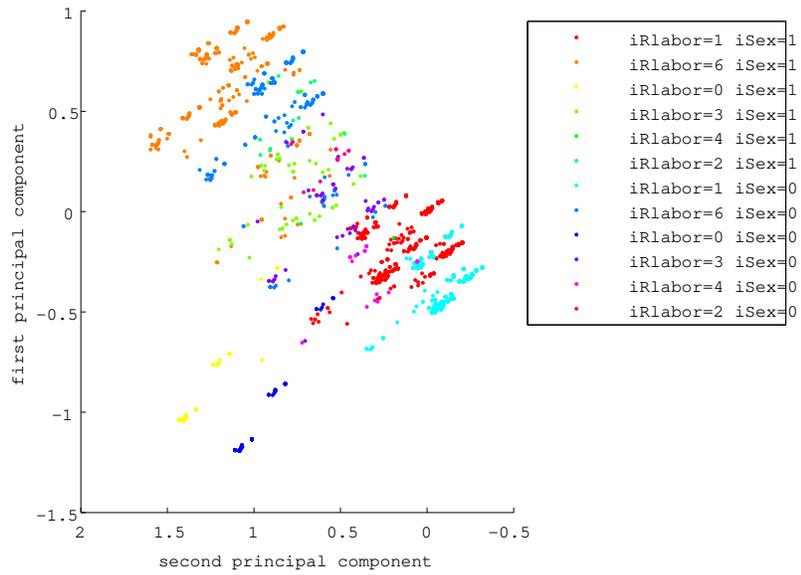}}

\caption{RRS-PCA  using top 5 principal components  }
\label{fig:usCensus1990rspcaPc1To5}
\end{figure}

\section{Conclusion}

We studied the covariances between a pair of categorical variables based
on Gini's definition of the variance for categorical data. The
introduction of the regular simplex expression for categorical values
enabled a reasonable definition of covariances, and an algorithm for
computing the covariance was proposed. The regular simplex expression
was also shown to be useful in the PCA analysis. We showed these merits
through numerical experiments using Fisher's data and USCensus1990 data.
In these experiments, our method applied to variable selection problem of categorical
data. The experiments showed our method gives appropriate criterion for variable selection.
 

\section*{Acknowledgment}
This research was partially supported by the Ministry of Education, Culture, Sport, Science and Technology, of Japan, with a Grant-in-Aid for Scientific Research on Priority Areas, 13131210 and a Grant-in-Aid 
for Scientific Research (A) 14208032.

\bibliographystyle{abbrv}
\bibliography{cov}

\end{document}